\documentclass{article}

\usepackage{PRIMEarxiv}

\usepackage[utf8]{inputenc} 
\usepackage[T1]{fontenc}    
\usepackage{hyperref}       
\usepackage{url}            
\usepackage{booktabs}       
\usepackage{amsfonts}       
\usepackage{nicefrac}       
\usepackage{microtype}      
\usepackage{lipsum}
\usepackage{fancyhdr}       
\usepackage{graphicx}       
\graphicspath{{media/}}     

\usepackage{tabularx}
\usepackage{amssymb}
\usepackage{tikz}
\usetikzlibrary{patterns}
\usepackage{times}
\usepackage{soul}
\usepackage{colortbl}
\usepackage{caption}
\usepackage{amsmath}
\usepackage{amsthm}
\usepackage{algorithm}
\usepackage{algorithmic}
\usepackage[switch]{lineno}

\newtheorem{theorem}{Theorem}
\pagestyle{fancy}
\thispagestyle{empty}
\rhead{ \textit{ }} 

\fancyhead[LO]{Running Title for Header}

\title{Pack-PTQ: Advancing Post-training Quantization of Neural Networks by Pack-wise Reconstruction
}

\author{
  Changjun Li, Runqing Jiang, Zhuo Song, Pengpeng Yu, Ye Zhang, Yulan Guo \\
  Shenzhen Campus, Sun Yat-sen University \\
  Aviation University of Air Force\\
  \texttt{\{lichj68, jiangrq3\}@mail2.sysu.edu.cn} \\
}

\begin{document}
\maketitle

\begin{abstract}
    Post-training quantization (PTQ) has evolved as a prominent solution for compressing complex models, which advocates a small calibration dataset and avoids end-to-end retraining.
    However, most existing PTQ methods employ block-wise reconstruction, which neglects cross-block dependency and exhibits a notable accuracy drop in low-bit cases. To address these limitations, this paper presents a novel PTQ method, dubbed Pack-PTQ. First, we design a Hessian-guided adaptive packing mechanism to partition blocks into non-overlapping packs, which serve as the base unit for reconstruction, thereby preserving the cross-block dependency and enabling accurate quantization parameters estimation. Second, based on the pack configuration, we propose a mixed-precision quantization approach to assign varied bit-widths to packs according to their distinct sensitivities, thereby further enhancing performance. Extensive experiments on 2D image and 3D point cloud classification tasks, using various network architectures, demonstrate the superiority of our method over the state-of-the-art PTQ methods.
\end{abstract}


\section{Introduction}

   The remarkable progress of neural networks has propelled significant breakthroughs in computer vision, with notable achievements in image classification~\cite{dosovitskiy2020image,touvron2021training}, object detection~\cite{liu2021swin,he2017mask,yang2024dual}, and semantic segmentation~\cite{strudel2021segmenter,xie2021segformer}. However, despite their impressive performance, deploying neural networks on resource-constrained edge devices (e.g., smartphones) remains a significant challenge due to their substantial computational and memory requirements. To handle this, model compression techniques have gained significant attention recently, which aim to compress and accelerate complex neural networks without compromising accuracy.
    
    
    The vast majority of model compression techniques can be divided into four categories: compact architectures~\cite{sandler2018mobilenetv2,liu2021swin}, network pruning~\cite{cheng2024survey,jiang2022sparse}, knowledge distillation~\cite{gou2021knowledge,jiang2023knowledge}, and model quantization~\cite{nagel2021white,li2021brecq}, in which model quantization draws particular attention for its ability to not only reduce model size but also accelerate inference by limiting bit precision. A line of works~\cite{jacob2018quantization,choi2018pact} adopt quantization during model training, which is called quantization-aware training (QAT). While QAT methods achieve high performance, they necessitate end-to-end retraining on the entire training set, resulting in substantial time and computational consumption. To circumvent the heavy retraining, post-training quantization (PTQ) has garnered increasing interest in recent times. PTQ-based methods~\cite{li2021brecq,yuan2022ptq4vit,nagel2020up} perform quantization on pre-trained models using a small-scale calibration set, thereby eliminating the need for end-to-end retraining. 

    \begin{figure}[t]
        \centering
        \includegraphics[width=0.6\columnwidth]{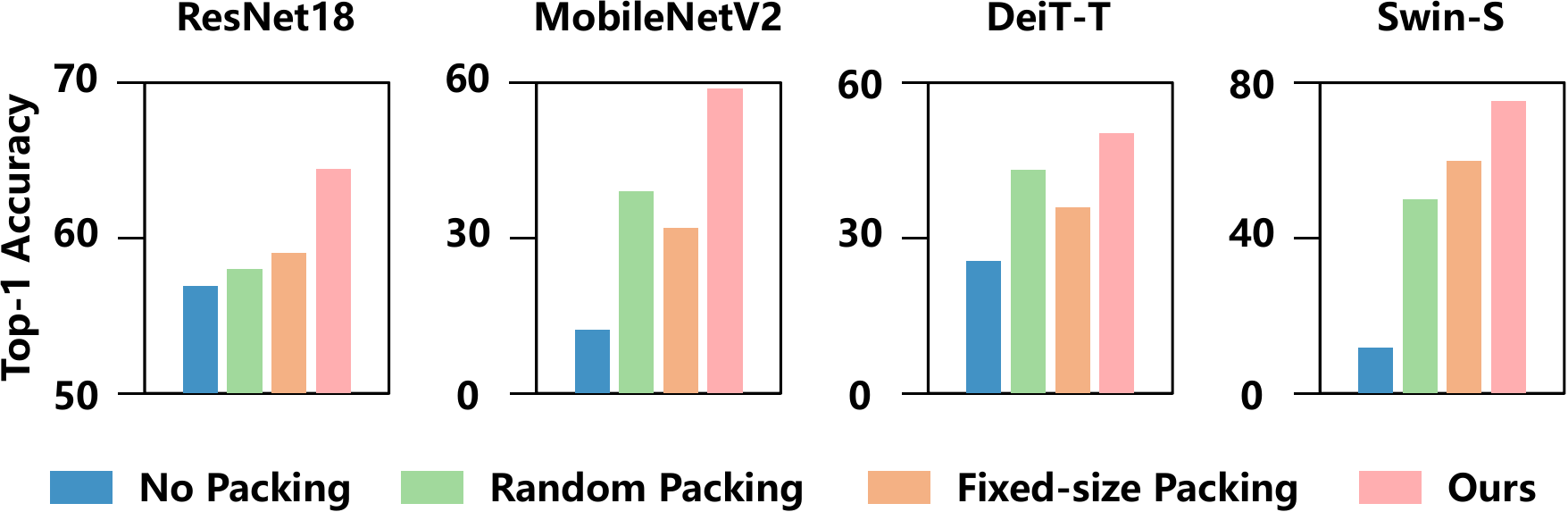}
        \caption{Quantization results of different reconstruction strategies on ImageNet with W3/A3 setting. ``No Packing'' means employing block-wise reconstruction, ``Random Packing'' means randomly assigning blocks into packs, and ``Fixed-size Packing'' means assigning blocks into packs with equal size.}
        \label{fig:motivation}
    \end{figure}

   Extensive PTQ-based studies ~\cite{li2021brecq,wei2022qdrop,nagel2020up} have focused on the reconstruction procedure, which seeks to align the outputs between a quantized model and its full-precision counterpart, to recover accuracy after quantization. However, due to the limited availability of calibration samples, traditional network-wise reconstruction often suffers from over-fitting, leading to subpar test accuracy. To overcome this limitation, recent studies ~\cite{li2021brecq,wei2022qdrop} have proposed employing reconstruction in a block-wise manner, which has been shown to yield improved performance. Unfortunately, as indicated in Fig.~\ref{fig:motivation}, these methods still exhibit limited performance in the case of ultra-low bit quantization, as they frequently underestimate the significance of cross-block dependency, resulting in inaccurate quantization parameters. Therefore, it is essential to revisit the reconstruction granularity of quantization, seeking an optimal balance between capturing cross-block dependencies and maintaining optimization efficacy.
    

    To tackle these challenges, this paper introduces Pack-PTQ, a novel post-training quantization method designed to effectively quantize neural networks, even in low-bit scenarios, as illustrated in Fig.~\ref{fig:motivation}. First, we propose a Hessian-guided adaptive packing mechanism, which adaptively clusters blocks into distinct packs, serving as the base unit for reconstruction. Specifically, this mechanism starts by calculating a Hessian-guided importance score to assess the impact of each block on its former modules, and then clusters blocks into packs with the guidance of these scores. Second, we recognize that different packs exhibit varied sensitivities to quantization, and thus develop a pack-based mixed-precision quantization strategy. This strategy assigns different bit-widths to different packs, demonstrating improved performance compared to unified precision quantization.
    

    The main contributions of this work are summarized in the following four aspects:
    
    \begin{itemize}
        \item We propose Pack-PTQ, a novel solution that combines a Hessian-guided adaptive packing mechanism with a pack-based mixed-precision quantization strategy, enabling high-performance post-training quantization of neural networks, particularly in low-bit cases.
        
        \item We devise a Hessian-guided adaptive packing mechanism to divide a neural network into non-overlapping packs, which provides a new reconstruction granularity, effectively capturing cross-block dependency.
        
        \item With the pack configuration, we develop a pack-based mixed-precision strategy that takes into account the distinct sensitivities of each pack to quantization, thereby enabling a more efficient allocation of bits.
        
        \item Through a comprehensive experimental evaluation on 2D image and 3D point cloud classification tasks, leveraging diverse neural network architectures, we demonstrate the consistent superiority of Pack-PTQ over several state-of-the-art PTQ methods, showcasing its robustness and effectiveness in real-world applications.
    \end{itemize}

\section{Related Works}

\subsection{Post-training Quantization}
   Model quantization is a technique aiming at accelerating inference and reducing memory footprint by limiting the bit-widths of network parameters~\cite{nagel2021white}. A traditional approach to mitigating the accompanying accuracy loss is to perform end-to-end retraining of the model, a method known as quantization-aware training (QAT)~\cite{jacob2018quantization,esser2019learned,nagel2022overcoming}. While QAT-based methods achieve high accuracy, they raise significant concerns regarding privacy and incur substantial time overhead due to the extensive retraining process. 
   
   To address the challenges above, post-training quantization (PTQ) has emerged as a preferred method for generating quantized models without requiring extensive retraining, garnering significant attention in the realm of model compression. A notable advancement in PTQ is BRECQ~\cite{li2021brecq}, which effectively addresses overfitting through block-wise reconstruction. By partially reproducing the Hessian matrix of the original network to serve as a regularizer, BRECQ achieves improved performance in PTQ, making it a promising approach for efficient model compression. Subsequent research has continued to advance the field of PTQ. Q-Drop~\cite{wei2023qdroprandomlydroppingquantization} enhances model robustness by integrating quantized and original data, while NoisyQuant~\cite{liu2023noisyquant} introduces random noise to reduce reliance on fully quantized data, thereby preserving the connectivity of the original model. More recently, SPQ~\cite{padeiro2024lightweight} has made significant contributions to this field by leveraging dataset interconnections to improve quantization, particularly in scenarios with limited calibration data. However, these methods have primarily focused on local data connections, potentially overlooking global inter-dependencies within the network. To address this gap, PD-Quant~\cite{liu2023pd} has introduced global monitoring mechanisms that enhance the accuracy of compressed models. Despite these advancements, challenges persist in fully capturing cross-block relationships, highlighting the ongoing need for improved optimization strategies in PTQ.

\subsection{Mixed-precision Quantization}
    It is commonly acknowledged that quantizing neural networks to 8-bit formats has achieved remarkable results with minimal accuracy degradation~\cite{nagel2019data,yuan2022ptq4vit}. However, quantizing to lower bit-widths (i.e., 3-bit) still remains challenging because of significant accuracy loss. To overcome this hurdle, mixed-precision quantization has emerged as a promising solution, which involves assigning different bit-widths to various layers or components of the models~\cite{chen2021towards}. This approach has shown great potential in balancing accuracy and computational efficiency.
    
    One notable approach in this domain is HAWQ~\cite{dong2019hawq}, which introduces an automated method to determine optimal mixed-precision settings based on layer sensitivity derived from the Hessian spectrum. Building upon this, HAWQ-V2 enhances this method by using the average Hessian trace as a more accurate sensitivity metric compared to the top eigenvalue~\cite{dong2020hawq}. Additionally, HAWQ-V2 employs an automated Pareto frontier method for layer-wise bit precision selection and extends Hessian-based analysis to mixed-precision activation quantization.
    A recent study further contributes to this field by proposing a novel framework that formulates mixed-precision quantization as a discrete constrained optimization problem~\cite{chen2021towards}. This framework introduces a Multiple-Choice Knapsack Problem (MCKP), which is solved using a greedy search algorithm to optimize bit-widths assignments. This approach provides a systematic way to balance the trade-off between model accuracy and computational efficiency. Moreover, APTQ takes into account the nonlinear effects of attention outputs on the entire model, providing a more nuanced approach to quantization~\cite{guan2024aptq}. By considering the impact of attention mechanisms, APTQ offers a refined method for determining the optimal bit-widths.

    \begin{figure*}[t]
        \centering
        \includegraphics[width=\textwidth]{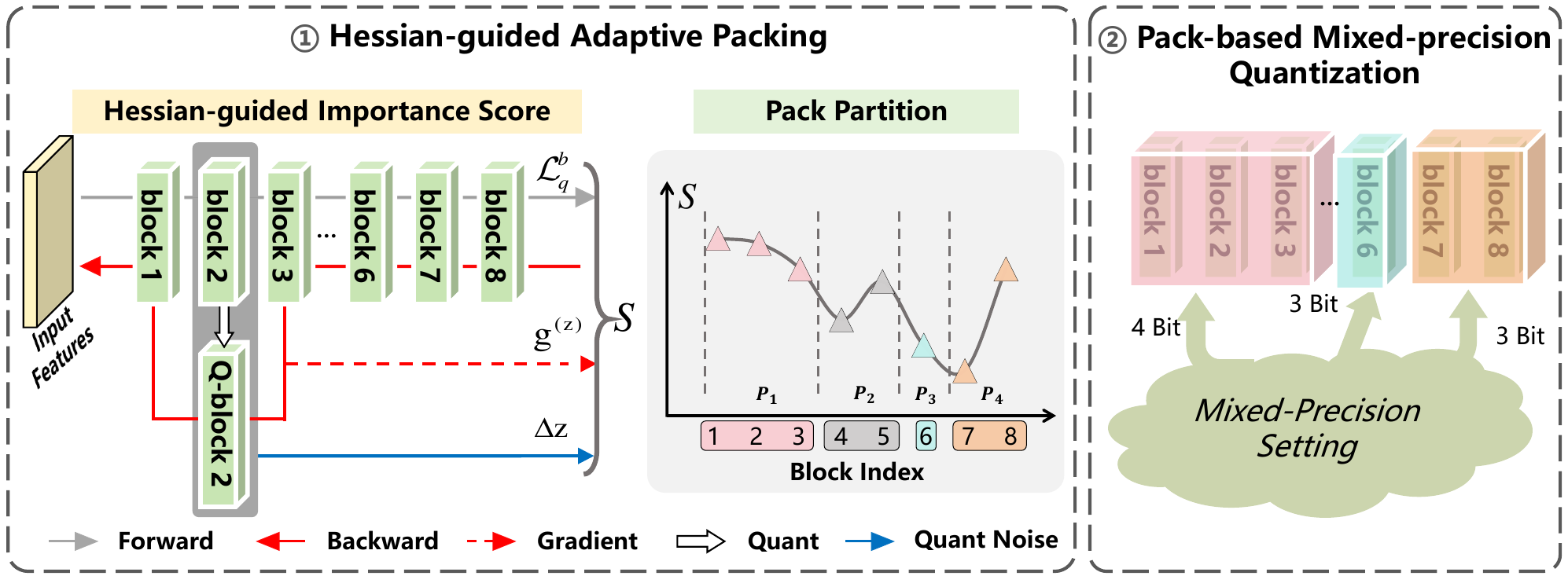}
        \caption{Overview of our proposed Pack-PTQ  method. We begin by computing individual block scores, which take into account the model loss, block output gradients, and block local loss. Subsequently, we employ our novel packing mechanism to cluster these blocks into packs. Finally, we assign diverse bit-widths to each pack, thereby achieving optimal pack reconstruction and facilitating efficient post-training quantization of neural networks.}
        \label{fig:overview}
    \end{figure*}

\section{Preliminaries}

\subsection{Quantization}
    The uniform quantizer is widely adopted due to its superior hardware compatibility, which converts input full-precision numbers to several fixed-points:
    \begin{equation}
       x_{q} = \mathrm{clamp}\left(\left\lfloor{\frac{x}{s} + z}\right\rceil, 0, 2^{k} - 1\right),
    \end{equation}
    where $x$ and $x_{q}$ stand for the input full-precision number and its quantized value, respectively. The $s$, $z$, and $k$ respectively denote the quantization scale, zero point, and bit-widths. $\left\lfloor \cdot \right\rceil$ denotes the rounding operation. In this paper, following BRECQ \cite{li2021brecq}, we employ uniform quantizers for both weights and activations.

\subsection{Taylor Expansion}
In line with previous efforts~\cite{li2021brecq,wei2022qdrop}, quantization imposed on weights can be regarded as a specific instance of weight perturbation. Therefore, the loss degradation caused by quantization can be approximated by Taylor series expansions, which is formulated as:
    \begin{align} \label{eq:taylor}
        \mathcal{L}_{q} = \mathbb{E}[\mathcal{L}(\mathbf{w} + \Delta \mathbf{w})] - \mathbb{E}[\mathcal{L}(\mathbf{w})]
        \approx \Delta \mathbf{w}^\top {\mathbf{g}}^{(\mathbf{w})} + \frac{1}{2} \Delta \mathbf{w}^\top {\mathbf{H}}^{(\mathbf{w})} \Delta \mathbf{w},
    \end{align}
where $\mathcal{L}$ is task-specific loss, $\Delta \mathbf{w}$ represents the noise introduced by quantization, ${\mathbf{g}}^{(\mathbf{w})}$ is the first-order derivative of the loss function. ${\mathbf{H}}^{(\mathbf{w})}$ is the second-order derivative of the loss function, known as the Hessian matrix.

\section{Method}




\subsection{Hessian-guided Adaptive Packing Mechanism} \label{sec:hessian}

    
  Block reconstruction strategy resorts to minimizing the quantization errors by aligning the outputs of the full-precision blocks and their quantized counterparts. This paradigm approximates the Hessian matrix with a block-diagonal matrix, in which off-block entries are set to zero. Despite its efficiency, such a skewed estimation of Hessian underestimates the hidden relationships across different blocks, ultimately leading to inferior reconstruction results. To illustrate this, we modify the reconstruction unit from a single block to a stack of blocks (denoted as packs), which allows us to capture the inter-block dependencies within a pack. As shown in Fig.~\ref{fig:motivation}, such a simple modification largely outperforms the widely used block-wise paradigm, which indicates the significance of exploring the cross-block relationships. In addition, the final performance differs significantly with various packing strategies (i.e., random packing and fixed-size packing), which motivates us to automate the packing configurations. 
  
    
    

    To achieve this, we develop a block importance metric that ranks the blocks and subsequently packs consecutive blocks based on this metric. The cornerstone of our approach is to assess the impact of each block's output on the preceding blocks. In this paper, we propose utilizing the Hessian matrix to form this metric, which inherently captures the second-order information of the proceeding modules. Specifically, we first rewrite Eq.~(\ref{eq:taylor}) to adapt to a single block's output: 
    \begin{align} \label{eq:lqb}
        \mathcal{L}_{q}^{b} = \mathbb{E}[\mathcal{L}(\mathbf{z} + \Delta \mathbf{z})] - \mathbb{E}[\mathcal{L}(\mathbf{z})] 
        \approx \Delta \mathbf{z}^\top {\mathbf{g}}^{(\mathbf{z})} + \frac{1}{2} \Delta \mathbf{z}^\top {\mathbf{H}}^{(\mathbf{z})} \Delta \mathbf{z},
    \end{align} 
    where \(\mathbf{z}\) and $\Delta \mathbf{z}$ represent the output of a block and the change of \(\mathbf{z}\) incurred by quantization, respectively. To fully consider the contributions of all elements in the Hessian matrix, we adopt the average value of the Hessian matrix as the block importance metric. However, directly computing \({\mathbf{H}}^{(\mathbf{z})}\) is infeasible due to the computational complexity of neural networks. Alternatively, we approximate the mean of the Hessian matrix, as detailed below.
    
    \begin{theorem}
        Assume that \( \Delta \mathbf{z} \) is a random vector where each component is i.i.d. as \( \mathcal{N}(0, \sigma^2) \). Then the expectation \(\mu_{{\mathbf{H}}^{(\mathbf{z})}}\) of all the elements in the Hessian matrix \( {\mathbf{H}}^{(\mathbf{z})} \) can be approximated as:
        \begin{equation}
            \mu_{{\mathbf{H}}^{(\mathbf{z})}} \approx \frac{\mathbb{E} [\Delta \mathbf{z}^\top {\mathbf{H}}^{(\mathbf{z})} \Delta \mathbf{z}]}{\mathbb{E} [\Delta \mathbf{z}^\top \Delta \mathbf{z}]}.
        \end{equation}
    \end{theorem}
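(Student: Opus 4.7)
The plan is to exploit the i.i.d.\ Gaussian assumption on $\Delta\mathbf{z}$ to evaluate the numerator and denominator of the right-hand side in closed form, identify the ratio with the average of the diagonal entries of $\mathbf{H}^{(\mathbf{z})}$, and then bridge from that diagonal average to the mean over all entries (which is where the ``$\approx$'' does its work).

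First I would dispatch the denominator: writing $\Delta\mathbf{z}=(\Delta z_1,\dots,\Delta z_n)^\top$, independence and the $\mathcal{N}(0,\sigma^2)$ marginals give $\mathbb{E}[\Delta\mathbf{z}^\top\Delta\mathbf{z}]=\sum_{i=1}^n \mathbb{E}[\Delta z_i^2]=n\sigma^2$. For the numerator, I would expand the quadratic form entrywise as
\begin{equation*}
\Delta\mathbf{z}^\top\mathbf{H}^{(\mathbf{z})}\Delta\mathbf{z}=\sum_{i,j=1}^n H_{ij}\,\Delta z_i\,\Delta z_j,
\end{equation*}
take expectations termwise, and use $\mathbb{E}[\Delta z_i\Delta z_j]=\sigma^2\delta_{ij}$ so that every off-diagonal summand vanishes. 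This collapses the double sum to the trace and yields $\mathbb{E}[\Delta\mathbf{z}^\top\mathbf{H}^{(\mathbf{z})}\Delta\mathbf{z}]=\sigma^2\operatorname{tr}(\mathbf{H}^{(\mathbf{z})})$. Dividing produces
\begin{equation*}
\frac{\mathbb{E}[\Delta\mathbf{z}^\top\mathbf{H}^{(\mathbf{z})}\Delta\mathbf{z}]}{\mathbb{E}[\Delta\mathbf{z}^\top\Delta\mathbf{z}]}=\frac{1}{n}\operatorname{tr}(\mathbf{H}^{(\mathbf{z})}),
\end{equation*}
i.e.\ the arithmetic mean of the diagonal entries of $\mathbf{H}^{(\mathbf{z})}$.

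The main obstacle, and the reason the statement is only approximate, is the gap between this diagonal mean $\tfrac{1}{n}\sum_i H_{ii}$ and the full mean $\mu_{\mathbf{H}^{(\mathbf{z})}}=\tfrac{1}{n^2}\sum_{i,j}H_{ij}$ over all $n^2$ entries. My plan for closing this gap is to invoke the (near-)diagonal Hessian approximation already standard in the PTQ literature and used earlier in the paper to justify block-wise reconstruction: under that approximation the off-diagonal mass is negligible compared with the trace, and up to the benign factor of $n$ (which can be absorbed into the definition of $\mu$ as either the entrywise mean of the matrix or the mean of its non-trivial entries) the two quantities agree. Making the ``$\approx$'' quantitative would require either an explicit bound on the Frobenius norm of the off-diagonal part of $\mathbf{H}^{(\mathbf{z})}$ or an assumption that the diagonal entries dominate in mean; absent such a bound, the rest of the argument is just the routine Gaussian trace calculation above.
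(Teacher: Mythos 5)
Your proposal is correct and follows essentially the same route as the paper: both reduce the numerator to $\sigma^2\operatorname{tr}(\mathbf{H}^{(\mathbf{z})})$ (you by entrywise expansion, the paper by the cyclic property of the trace) and the denominator to $n\sigma^2$. The only difference is that you make explicit the step the paper hides in its single ``$\approx$'' --- identifying the diagonal average $\tfrac{1}{n}\operatorname{tr}(\mathbf{H}^{(\mathbf{z})})$ with the all-entries mean $\mu_{\mathbf{H}^{(\mathbf{z})}}$ --- which the paper simply asserts as $\operatorname{tr}(\mathbf{H}^{(\mathbf{z})}\sigma^2 I)\approx\mu_{\mathbf{H}^{(\mathbf{z})}}\sigma^2 n$ without justification, whereas you correctly flag that this requires an assumption on the off-diagonal mass (and that literal diagonal dominance would in fact introduce a factor of $n$ unless $\mu$ is reinterpreted).
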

    
    \begin{proof}
        Utilizing the knowledge of expectations of quadratic forms and the properties of the covariance matrix of random vectors, we have:
        \begin{align} \label{eq:zhz}
            \mathbb{E}[\Delta \mathbf{z}^\top {\mathbf{H}}^{(\mathbf{z})} \Delta \mathbf{z}]
            = \text{tr} ({\mathbf{H}}^{(\mathbf{z})} \mathbb{E} [\Delta \mathbf{z} \Delta \mathbf{z}^\top]) 
            = \text{tr}({\mathbf{H}}^{(\mathbf{z})} \sigma^2 I) 
            \approx \mu_{{\mathbf{H}}^{(\mathbf{z})}} \sigma^2 n.
        \end{align}
        At the same time, the  \( \mathbb{E}[\Delta \mathbf{z}^\top \Delta \mathbf{z}] \) can be concluded as:
        \begin{equation} \label{eq:zz}
            \mathbb{E} [\Delta \mathbf{z}^\top \Delta \mathbf{z}] = \text{tr} (\mathbb{E} [\Delta \mathbf{z} \Delta \mathbf{z}^\top]) = \text{tr}(\sigma^2 I) = \sigma^2 n.
        \end{equation}
        Finally, combining Eq.(\ref{eq:zhz}) and Eq.(\ref{eq:zz}), the $\mu_{{\mathbf{H}}^{(\mathbf{z})}}$ can be obtained by:
        \begin{equation} \label{eq:yy}
            \mu_{{\mathbf{H}}^{(\mathbf{z})}} = \frac{\mu_{{\mathbf{H}}^{(\mathbf{z})}} \sigma^2 n}{\sigma^2 n} \approx \frac{\mathbb{E} [\Delta \mathbf{z}^\top {\mathbf{H}}^{(\mathbf{z})} \Delta \mathbf{z}]}{\mathbb{E} [\Delta \mathbf{z}^\top \Delta \mathbf{z}]}.
        \end{equation}
        \end{proof}
    According to Eq.(\ref{eq:lqb}) and Eq.(\ref{eq:yy}), we obtain the importance score for each block as:
    \begin{equation}
        S = \mu_{{\mathbf{H}}^{(\mathbf{z})}}
        \approx \frac{\mathbb{E} [2 \times (\mathcal{L}_{q}^{b} - \Delta \mathbf{z}^\top {\mathbf{g}}^{(\mathbf{z})})]}{\mathbb{E} [\Delta \mathbf{z}^\top \Delta \mathbf{z}]}.
    \end{equation}
    Notably, this formulation simplifies the computation of the Hessian-based block importance metric to achieve efficiency. A low value of \( S \) indicates that perturbations to the corresponding block have minimal impact on subsequent blocks and even the final output. 

    
    After that, given a set of blocks \( \mathcal{B} = \{ B_1, B_2, \ldots, B_n \} \) with their corresponding importance scores \( \mathcal{S} = \{ S_1, S_2, \ldots, S_n \} \), we aim to assign different blocks to distinct packs. We initiate this process by setting two indicators, $t_s=1$ and $t_e=n$, which define the search scope for the current pack. Then, we identify the block with the lowest importance score within this scope, whose index is denoted as $t_{min} \in [t_s, t_e]$. Based on the above requirements, a new pack is formed by grouping the blocks from $t_{min}$ to $t_e$, which is formally defined as:
    
    \begin{equation}
        P = \left\{ \bigcup_{t=t_{\text{min}}}^{t_e} B_t \;\middle|\; t_{\text{min}} = \arg\min_{t_s \leq t \leq t_e} S[t] \right\}
    \end{equation}
    where \( t \) is the index of a block, and $\mathcal{S}[t]$ denotes the $t$-th entry of $\mathcal{S}$. Once a pack is formed, the $t_e$ is updated to $t_{min}-1$, and the process is repeated until each block is meticulously assigned to a unique pack. This approach partitions the network into non-overlapping packs, where the first block in each pack is characterized by a lower importance score, reflecting its limited influence on the former modules. The blocks within the same pack are tightly coupled, implying they are highly interdependent. By jointly optimizing these blocks, we can derive more accurate quantization parameters, as the correlated behaviors within each pack are effectively captured. We also explore diverse packing partition strategies, which are detailed in Sec.~\ref{exp_abla}.

\subsection{Pack-based Mixed-precision Quantization}
As previous studies \cite{dong2019hawq,chen2021towards} have shown, different layers exhibit varying sensitivities to quantization, implying that a unified bit-widths assignment across all layers is suboptimal. Specifically, some blocks are relatively insensitive to low-bit quantization and can tolerate aggressive compression, while others require higher precision to maintain accuracy. Motivated by this, we propose a novel approach that assigns different bit-widths to different packs, striking a balance between performance and memory footprint. To be specific, assume that the constraint of memory footprint is $C$ and the pre-defined candidate bit-widths set is $\mathcal{K} = \{k_1, k_2, ..., k_n \}$, and our goal is to determine the optimal bit-widths configuration that satisfies the memory constraint while maximizing model accuracy. To achieve this, we formulate the problem as an optimization objective, which can be mathematically expressed as:

\begin{equation} \label{eq:mix-q}
    \max_{\{b_1, b_2, ..., b_M\}} \sum_{j=1}^{M} b_j \cdot \Omega_j, \quad \text{s.t.} \quad \sum_{j=1}^{M} b_j \cdot p_j \leq C,
\end{equation} 
where \( M \) and \( p_j \) denote the number of packs and parameters in the \( j \)-th pack, respectively. The $b_j \in \mathcal{K}$ is the bit-width for the $j$-th pack, and the $\Omega_j$ is the mean sensitivity for the $j$-th pack, which is defined as: 
\begin{equation}
    \Omega_j = \frac{1}{n_j} \sum_{i=1}^{n_j} (S_{j}[i] \cdot \mathcal{L}^q_{j}[i]),
\end{equation}
where \(n_j\) denotes the number of blocks in \( j\)-th pack, \( S_{j}[i] \) represents the $i$-th importance score of the block in the $j$-th pack, and \( \mathcal{L}^q_{j}[i] \) denotes the quantization loss of the $i$-th block in the $j$-th pack. By solving Eq.~(\ref{eq:mix-q}), packs with higher sensitivity are allocated higher bit-widths, thereby preserving their accuracy, while packs with lower sensitivity are subjected to more aggressive quantization, thereby reducing memory footprint. Such a way enables flexible and efficient mixed-precision quantization, improving network performance while adhering to memory limitations.





\subsection{Optimization}
    To recover the accuracy of quantized models, we leverage the packs obtained in Sec.~\ref{sec:hessian} as the base unit for reconstruction. Specifically, we define a pack-wise reconstruction loss, which is formulated as:
    \begin{equation}
        \mathcal{L}_{rec} = \mathbb{E}[||P^{(l)}_q(x^{(l)}) - P^{(l)}(x^{(l)})||_{\mathrm F}],
    \end{equation}
    where $x^{(l)}$ is the input for the $l$-th pack and $||\cdot||_{\mathrm F}$ represents the Frobenius norm. $P^{(l)}_q(x^{(l)})$ and $P^{(l)}(x^{(l)})$ denote the outputs of the $l$-th quantized pack and full-precision pack, respectively. Notably, the reconstruction granularity can differ among packs since the packs have varied sizes, enabling the model to adaptively learn cross-block dependencies and capture nuanced relationships between blocks. Thus, such a manner facilitates the learning of more accurate quantization parameters, which is essential for preserving the model's accuracy during the quantization process. 

\section{Experiments} \label{sec:exp}

\subsection{Experimental Setups}


    \subsubsection{Datasets and Competing Methods} 

    
    We conduct comprehensive experiments on image classification and point cloud classification tasks. For image classification, Pack-PTQ is evaluated on the ImageNet dataset~\cite{krizhevsky2012imagenet}, which consists of 1,000 categories and 50,000 test samples. 
    {We use both} Convolutional Neural Networks (CNNs) and Vision Transformers (ViTs) as network architectures to validate the effectiveness of our method.
    To be specific, in the case of CNNs, we compare Pack-PTQ with several state-of-the-art methods, including BRECQ~\cite{li2021brecq}, RAPQ~\cite{yao2022rapq}, MRECG~\cite{ma2023solving}, and Genie~\cite{jeon2023genie}. For ViTs, the evaluation is conducted against BRECQ~\cite{li2021brecq}, QDrop~\cite{wei2022qdrop}, PTQ4ViT~\cite{yuan2022ptq4vit}, PD-Quant~\cite{liu2023pd}, and RepQ-ViT~\cite{li2023repq}. As for the point cloud classification, we evaluate our Pack-PTQ on ModelNet40 \cite{wu20153d} which consists of 40 categories with a test set of 2,468 samples. To the best of our knowledge, there are no previous works dedicated to post-training quantization on ModelNet40. Therefore, we reimplement the MinMax~\cite{nagel2020up} and BRECQ~\cite{li2021brecq} with their official codes as our competing methods.
    
    \subsubsection{Implementation Details} 

    {In our experiments, we utilize pre-trained full-precision transformer models from the Timm library\footnote{\url{https://github.com/rwightman/pytorch-image-models}} and CNN models from BRECQ~\cite{li2021brecq}. 
    We apply uniform quantization to all model weights and activations. 
    We adopt the Adam optimizer with an initial learning rate of 4e-5, no weight decay, and a cosine decay schedule to adjust the learning rate.}
    All experiments are conducted with a batch size of 32 across 2,000 training iterations, following the protocol outlined in prior work~\cite{zhong2023s}. For the image classification task, Pack-PTQ is evaluated on two architecture families: CNNs and ViTs. For CNNs, the evaluation spans various architectures, including ResNet~\cite{he2016deep} (standard convolution), MobileNetV2~\cite{sandler2018mobilenetv2} (depthwise separable convolution), RegNet~\cite{radosavovic2020designing} (group convolution), and MNasNet~\cite{tan2019mnasnet}. For ViTs, it includes ViT~\cite{dosovitskiy2020image}, DeiT~\cite{touvron2021training}, and Swin Transformer~\cite{liu2021swin}. For the point cloud classification task, evaluations are conducted exclusively on PointNet \cite{qi2017pointnet}. All experiments are performed using a single NVIDIA TITAN RTX GPU. The source code is released in our supplementary materials.

\subsection{Results on Image Classification}


    {Table \ref{tab:booktabs_cnns} shows the test accuracy of various methods on ImageNet for different CNN models and bit-widths. As shown in Table \ref{tab:booktabs_cnns}, our Pack-PTQ consistently outperforms existing methods in both W3/A3 and W4/A4 quantization settings. Notably, without mixed-precision (w/o MP), Pack-PTQ surpasses Genie, the second-best performer, by 8.15\% on MobileNetV2 under W3/A3 quantization. When using mixed-precision, the improvement increases to 12.86\%, demonstrating the effectiveness of bit-width allocation.
    Furthermore, our method significantly outperforms BRECQ, which achieves an accuracy of only 12.27\% under the same setting. The performance gap between our method and BRECQ is particularly pronounced on MobileNetV2, primarily due to the absence of cross-block dependency in BRECQ. Although our method ranks second on some architectures under W2/A4 quantization, the performance differences are minor. For instance, our Pack-PTQ achieves 61.29\% accuracy on ResNet18, only 0.21\% behind BRECQ, the best performer. However, our method demonstrates superior generalization performance across all architectures. Moreover, our results show minimal performance degradation when using lower bit-widths, highlighting the robustness of our Pack-PTQ.}

     \begin{table*}[t!]
        \centering
        \caption{Quantization results for CNNs on the ImageNet dataset. The top-1 accuracy (\%) is reported. ``Bit. (W/A)'' represents the bit width for weights and activations. \textbf{Bold} font indicates the highest value. \text{\underline{Underlined}} data represents the second highest value.}
            \begin{tabular}{ lcrrrrrr }
                \toprule
                \textbf{Method}    & \textbf{Bit. (W/A)}  & \textbf{ResNet18} & \textbf{ResNet50} & \textbf{MNV2} & \textbf{Reg600M} & \textbf{Reg3.2G} & \textbf{MNasx2} \\
                \midrule
                \text{Full-Precision}  & \text{32/32}          & \text{71.08}   & \text{77.00}   & \text{72.49}    & \text{73.71}    & \text{78.36}  & \text{76.68} \\
                \midrule
                \text{BRECQ \cite{li2021brecq}}            & \text{2/4}          & \textbf{61.50} & \text{\underline{62.79}} & \text{21.18} & \text{56.49} & \text{\underline{59.51}} & \text{49.11} \\
                \text{RAPQ \cite{yao2022rapq}}        & \text{2/4}          & \text{61.18} & \text{59.70} & \text{34.47} & \text{54.80} & \text{57.47} & \text{-} \\
                \text{MRECG \cite{ma2023solving}}           & \text{2/4}          & \text{59.19} & \text{57.96} & \text{47.08} & \text{26.26} & \text{-} & \text{-} \\
                \text{Genie \cite{jeon2023genie}}           & \text{2/4}          & \text{61.27} & \text{57.27} & \text{\underline{50.44}} & \text{\underline{57.77}} & \text{56.74} & \textbf{62.24} \\
                \rowcolor{gray!20} \text{Pack-PTQ (Ours) w/o MP}           & \text{2/4}          & \text{\underline{61.29}} & \textbf{63.47} & \textbf{55.86} & \textbf{59.88} & \textbf{63.14} & \text{\underline{59.17}} \\
                \midrule
                \text{BRECQ \cite{li2021brecq}}            & \text{3/3}          & \text{56.89} & \text{61.84} & \text{12.27} & \text{47.85} & \text{51.14} & \text{41.00} \\
                \text{RAPQ \cite{yao2022rapq}}        & \text{3/3}          & \text{55.44} & \text{58.71} & \text{7.33} & \text{44.94} & \text{51.04} & \text{-} \\
                \text{MRECG \cite{ma2023solving}}           & \text{3/3}          & \text{60.01} & \text{66.82} & \text{48.87} & \text{57.75} & \text{-} & \text{-} \\
                \text{Genie \cite{jeon2023genie}}           & \text{3/3}          & \text{61.10} & \text{67.96} & \text{50.68} & \text{59.53} & \text{67.11} & \text{59.59} \\
                \text{TexQ \cite{chen2024texq}}           & \text{3/3}          & \text{50.28} & \text{25.27} & \text{32.80} & \text{-} & \text{-} & \text{-} \\
                \rowcolor{gray!20} \text{Pack-PTQ (Ours) w/o MP}           & \text{3/3}          & \text{\underline{64.46}} & \text{\underline{70.34}} & \text{\underline{58.83}} & \text{\underline{65.12}} & \text{\underline{70.81}} & \text{\underline{62.71}} \\
                \rowcolor{gray!20} \text{Pack-PTQ (Ours) with MP} & \text{3/3} & \textbf{66.73} & \textbf{72.14} & \textbf{63.54} & \textbf{68.67} & \textbf{72.88} & \textbf{66.76} \\
                \midrule
                \text{BRECQ \cite{li2021brecq}}            & \text{4/4}          & \text{68.20} & \text{73.20} & \text{61.93} & \text{68.91} & \text{74.09} & \text{69.57} \\
                \text{RAPQ \cite{yao2022rapq}}        & \text{4/4}          & \text{68.11} & \text{73.08} & \text{60.50} & \text{68.25} & \text{73.64} & \text{-} \\
                \text{MRECG \cite{ma2023solving}}           & \text{4/4}          & \text{67.69} & \text{73.96} & \text{66.55} & \text{69.22} & \text{-} & \text{-} \\
                \text{Genie \cite{jeon2023genie}}           & \text{4/4}          & \text{67.80} & \text{74.07} & \text{66.85} & \text{69.50} & \text{75.23} & \text{69.83} \\
                \text{TexQ \cite{chen2024texq}}           & \text{4/4}          & \text{67.73} & \text{70.72} & \text{67.07} & \text{-} & \text{-} & \text{-} \\
                \rowcolor{gray!20} \text{Pack-PTQ (Ours) w/o MP}           & \text{4/4}          & \text{\underline{68.74}} & \text{\underline{74.74}} & \text{\underline{68.58}} & \text{\underline{70.96}} & \text{\underline{76.83}} & \text{\underline{72.51}} \\
                \rowcolor{gray!20} \text{Pack-PTQ (Ours) with MP}           & \text{4/4}          & \textbf{69.86} & \textbf{75.51} & \textbf{70.41} & \textbf{72.31} & \textbf{77.70} & \textbf{74.27} \\
                \bottomrule
            \end{tabular}
        \label{tab:booktabs_cnns}
    \end{table*}

    \begin{table*}[t!]
        \centering
        \caption{Quantization results for ViTs on the ImageNet dataset. The top-1 accuracy (\%) is reported. ``Bit. (W/A)'' represents the bit width for weights and activations. \textbf{Bold} font indicates the highest value. \text{\underline{Underlined}} data represents the second highest value.}
            \begin{tabular}{lccrrrrrrrrrr}
                \toprule
                \textbf{Method}    & \textbf{Bit. (W/A)}  & \textbf{ViT-S} & \textbf{ViT-B} & \textbf{DeiT-T} & \textbf{DeiT-S} & \textbf{DeiT-B} 
                & \textbf{Swin-S}    & \textbf{Swin-B} \\
                \midrule
                \text{Full-Precision}& \text{32/32}          & \text{81.39}   & \text{84.54}   & \text{72.21}    & \text{79.85}    & \text{81.80} 
                & \text{83.23}         & \text{85.27} \\
                \midrule
                \text{BRECQ~\cite{li2021brecq}}        & \text{3/3}            & \text{0.42}   & \text{0.59}   & \text{25.52}    & \text{14.63}    & \text{46.29} 
                & \text{11.67}         & \text{1.70} \\
                \text{QDrop~\cite{wei2022qdrop}}         & \text{3/3}            & \text{4.44}   & \text{8.00}   & \text{30.73}    & \text{22.67}    & \text{24.37} 
                & \text{60.89}         & \text{54.76} \\
                \text{PTQ4ViT~\cite{yuan2022ptq4vit}}       & \text{3/3}            & \text{0.01}   & \text{0.01}   & \text{0.04}    & \text{0.01}    & \text{0.27} 
                & \text{0.35}         & \text{0.29} \\
                \text{PD-Quant~\cite{liu2023pd}}     & \text{3/3}            & \text{1.77}   & \text{13.09}   & \text{39.97}    & \text{29.33}    & \text{0.94} 
                & \text{69.67}         & \text{64.32} \\
                \text{RepQ-ViT~\cite{li2023repq}}      & \text{3/3}            & \text{0.43}   & \text{0.14}   & \text{0.97}    & \text{4.37}    & \text{4.84} 
                & \text{8.84}         & \text{1.34} \\
                \text{Adalog~\cite{wu2025adalog}}      & \text{3/3}            & \text{10.32}   & \text{30.10}   & \text{23.06}    & \text{19.61}    & \text{53.33} 
                & \text{54.66}         & \text{62.38} \\
                \rowcolor{gray!20} \text{Pack-PTQ (Ours) w/o MP}   & \text{3/3}            & \text{\underline{47.68}}   & \text{\underline{61.65}}   & \text{\underline{50.31}}    & \text{\underline{53.04}}    & \text{\underline{70.13}} 
                & \text{\underline{75.42}}         & \text{\underline{76.04}} \\
                \rowcolor{gray!20}  \text{Pack-PTQ (Ours) with MP}    & \text{3/3}           & \textbf{55.56}    & \textbf{64.83} 
                & \textbf{58.19}   & \textbf{58.38}   & \textbf{74.41}    & \textbf{78.92}         & \textbf{80.49} \\
                \midrule
                \text{BRECQ~\cite{li2021brecq}} & \text{4/4} & \text{12.36} & \text{9.68} & \text{55.63} & \text{63.73} & \text{72.31} & \text{72.74} & \text{58.24} \\
                \text{QDrop~\cite{wei2022qdrop}} & \text{4/4} & \text{21.24} & \text{47.30} & \text{61.93} & \text{68.27} & \text{72.60} & \text{79.58} & \text{80.93} \\
                \text{PTQ4ViT~\cite{yuan2022ptq4vit}} & \text{4/4} & \text{42.57} & \text{30.69} & \text{36.96} & \text{34.08} & \text{64.39} & \text{76.09} & \text{74.02} \\
                \text{APQ-ViT~\cite{ding2022towards}} & \text{4/4} & \text{47.95} & \text{41.41} & \text{47.94} & \text{43.55} & \text{67.48} & \text{77.15} & \text{76.48} \\
                \text{PD-Quant~\cite{liu2023pd}} & \text{4/4} & \text{1.51} & \text{32.45} & \text{62.46} & \text{71.21} & \text{73.76} & \text{79.87} & \text{81.12} \\
                \text{RepQ-ViT~\cite{li2023repq}} & \text{4/4} & \text{65.05} & \text{68.48} & \text{57.43} & \text{69.03} & \text{75.61} & \text{79.45} & \text{78.32} \\
                \text{Adalog~\cite{wu2025adalog}}      & \text{4/4}            & \textbf{72.13}   & \textbf{79.16}   & \text{62.14}    & \text{\underline{71.88}}    & \text{77.47} 
                & \text{79.27}         & \text{80.92} \\
                 \rowcolor{gray!20} \text{Pack-PTQ (Ours) w/o MP}   & \text{4/4}            & \text{61.03}   & \text{75.71}   & \text{\underline{64.53}}    & \text{71.57}    & \text{\underline{78.45}} 
                & \text{\underline{80.82}}         & \text{\underline{82.34}} \\
               \rowcolor{gray!20}  \text{Pack-PTQ (Ours) with MP}   & \text{4/4}           & \text{\underline{66.83}}    & \text{\underline{77.89}} 
                & \textbf{68.24}   & \textbf{75.33}   & \textbf{79.41}    & \textbf{82.18}         & \textbf{83.75} \\
                \bottomrule
            \end{tabular}
        \label{tab:booktabs_vits}
    \end{table*}

    {We further evaluate our Pack-PTQ on several representative ViT models, as shown in Table \ref{tab:booktabs_vits}. Overall, our Pack-PTQ achieves state-of-the-art performance in almost all cases, with the smallest gap with the full-precision model. Notably, in the W3/A3 setting, most competing methods exhibit significantly degraded performance compared to full-precision models. For instance, RepQ-ViT and PTQ4ViT achieve only 0.14\% and 0.01\% accuracy on ViT-B and DeiT-S, respectively, highlighting the challenges in quantizing ViTs in low-bit cases. In contrast, our method achieves accuracy of 61.65\% and 50.31\% without mixed-precision, and further reaches 64.83\% and 58.19\% with mixed-precision, respectively. 
    Although Adalog outperforms Pack-PTQ in terms of W4/A4 quantization on ViT-S and ViT-B, its complex quantizer for post-Softmax activations limits the efficiency of quantized models. Moreover, Adalog's advantages do not generalize to the W3/A3 setting, where our method significantly outperforms it.
    The results in Table \ref{tab:booktabs_vits} also underscore the importance of cross-block dependencies in maintaining performance during quantization, as evidenced by BRECQ's effectiveness on CNNs but poor performance on ViTs. 
    }
    

\subsection{Results on Point Cloud Classification}

    {We also conduct experiments on the 3D point cloud classification task, as shown in Table \ref{tab:booktabs_modelnet40}. Overall, our method outperforms all other methods in all settings, achieving impressive performance comparable to that of full-precision models. Notably, our method exhibits negligible performance degradation compared to the full-precision model in the W2/A4 and W3/A3 quantization settings without mixed-precision. Specifically, we observe a slight decrease of 0.45\% and 0.17\% for W2/A4, and 0.46\% and 0.13\% for W3/A3 quantization in terms of mAcc and OA, respectively.
    Moreover, in the W3/A3 setting, our method even surpasses the full-precision performance when using mixed precision, and consistently outperforms the full-precision model in the W4/A4 quantization setting regardless of whether mixed precision is used. This suggests that point cloud networks may inherently possess higher precision redundancy, which mitigates the impact of bit-width on the model's generalization ability.
    }

\begin{table*}[ht]
\centering
        \caption{Quantization results on ModelNet40. ``mACC'' is the mean of class-wise accuracy (\%) and ``OA'' is the overall accuracy (\%).}
\begin{tabular}{lc|rr}
            \toprule
            \textbf{Method} & \textbf{Bit. (W/A)} & \textbf{mAcc} & \textbf{OA} \\
            \midrule
            \text{Full-Precision} & \text{32/32} & \text{92.01} & \text{88.54} \\
            \midrule
            \text{MinMax} & \text{2/4} & \text{7.51} & \text{12.85} \\
            \text{BRECQ~\cite{li2021brecq}} & \text{2/4} & \text{51.25} & \text{43.41} \\
            \rowcolor{gray!20} \text{Pack-PTQ (Ours) w/o MP} & \text{2/4} & \textbf{91.56} & \textbf{88.37} \\
            \midrule
            \text{MinMax} & \text{3/3} & \text{78.95} & \text{75.77} \\
            \text{BRECQ~\cite{li2021brecq}} & \text{3/3} & \text{67.48} & \text{61.18} \\
            \rowcolor{gray!20} \text{Pack-PTQ (Ours) w/o MP} & \text{3/3} & \text{91.55} & \text{88.41} \\
            \rowcolor{gray!20} \text{Pack-PTQ (Ours) with MP} & \text{3/3} & \textbf{92.28} & \textbf{89.53} \\
            \midrule
            \text{MinMax} & \text{4/4} & \text{91.44} & \textbf{88.98} \\
            \text{BRECQ~\cite{li2021brecq}} & \text{4/4} & \text{87.25} & \text{82.70} \\
            \rowcolor{gray!20} \text{Pack-PTQ (Ours) w/o MP} & \text{4/4} & \textbf{92.37} & \text{88.71} \\
            \rowcolor{gray!20} \text{Pack-PTQ (Ours) with MP} & \text{4/4} & \text{92.21} & \text{88.56} \\
            \bottomrule
\end{tabular}
        \label{tab:booktabs_modelnet40}
\end{table*}
    
    
    BRECQ suffers significant performance degradation in the W3/A3 setting, with a notable drop of 11.47\% in mAcc and 14.59\% in OA compared to MinMax. This degradation can be attributed to the fact that block-wise overlooks the cross-block relationships, thus leading to inaccurate quantitation parameters. In stark contrast, our method demonstrates remarkable stability and consistency across all scenarios, showcasing its robustness and effectiveness in preserving model performance even at lower bit-widths.

\subsection{Ablation Studies} \label{exp_abla}

\subsubsection{Effectiveness of Key Components of Pack-PTQ }

    {To validate the effectiveness of block packing and mixed-precision (MP) optimization, we conduct ablation studies on ResNet18 and ViT-S models using the W3/A3 quantization setting, as shown in Table~\ref{tab:booktabs_ablation}. We begin by evaluating the block-wise optimization (without any variants) as the baseline, which yields inferior performance, with ViT-S achieving only 0.42\% accuracy.
    We then examine the impact of applying random packing and mixed-precision settings separately, as shown in the second and fourth rows of the table. 
    These components lead to improvements for both architectures, with ViT-S achieving significant increases of 35.10\% and 39.93\% accuracy, respectively.
    Notably, replacing the random packing strategy with our HAda mechanism further enhances performance, particularly on the ViT architecture, where accuracy improves to 47.68\% (+12.16\%), as evident from the comparison between rows 2 and 3. This demonstrates the effectiveness of our HAda mechanism.
    Furthermore, combining block packing and MP optimization yields even better performance. Among the various configurations, jointly using HAda and MP (ours) proves to be the most advantageous, achieving the highest accuracy of 66.73\% and 55.56\% on ResNet18 and ViT-S, respectively. 
    }

    \begin{figure}[!t]
        \centering
        \includegraphics[width=0.6\linewidth]{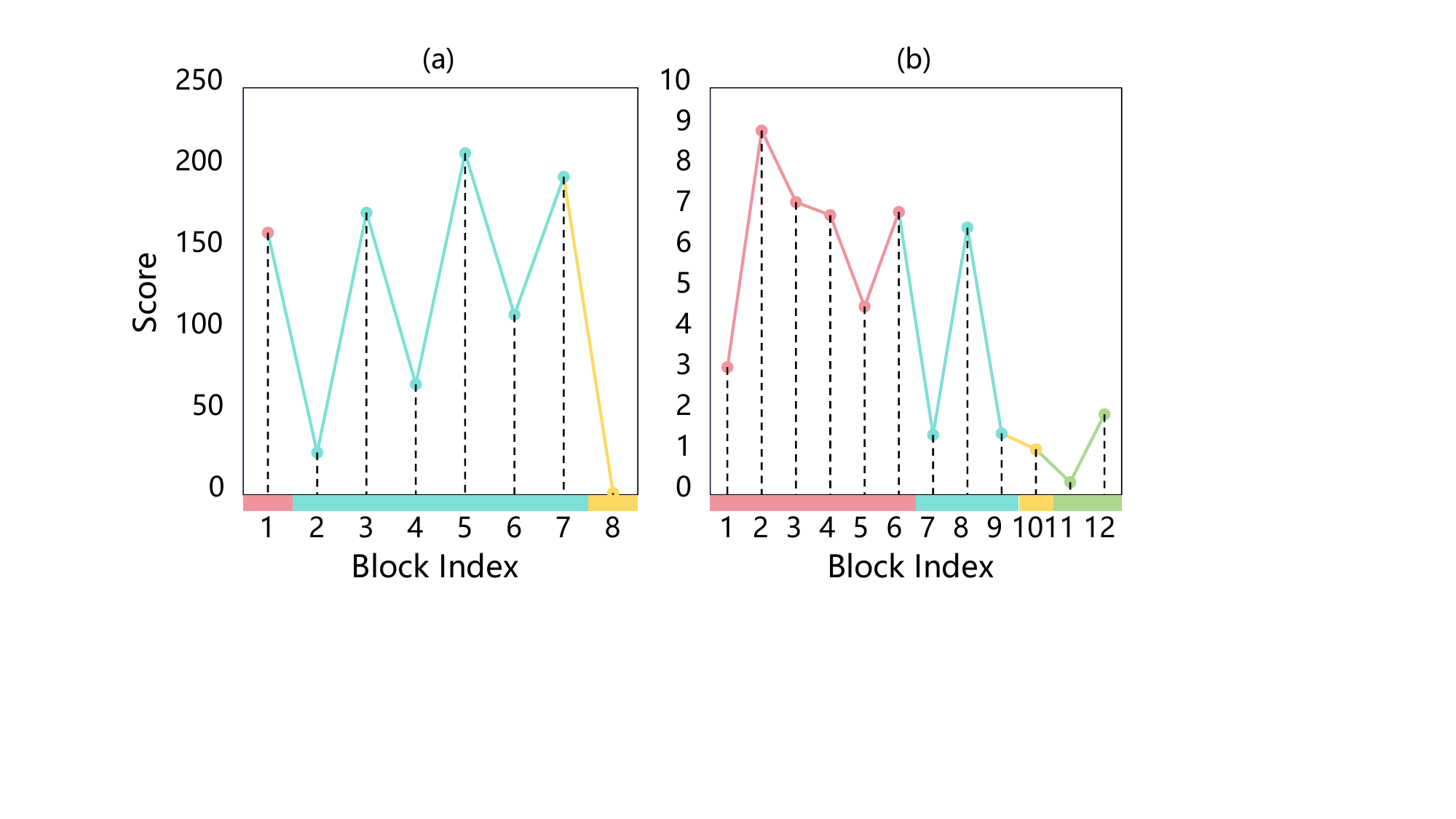} 
        \caption{Visualization of important scores of blocks on (a) ResNet18 and (b) ViT-S. Different colors denote different packs.}
        \label{fig:visualize}
    \end{figure}

 \begin{table*}[!h]
        \centering
        \caption{Top-1 accuracy (\%) of different variants. ``Random'', ``HAda'', and ``MP'' denote the random packing, Hessian-guided adaptive packing, and mixed-precision respectively.}
        \begin{tabular}{ccc|cc}
            \toprule
            \multicolumn{3}{c}{\textbf{Method}} & \multicolumn{2}{c}{\textbf{Accuracy}}       \\
            \cmidrule{1-5} 
            Random    & HAda    & MP   & ResNet18   & ViT-S \\
            \midrule
            $\times$     & $\times$  & $\times$ & 56.89 & 0.42     \\
            $\checkmark$ & $\times$  & $\times$ & 62.81 & 35.52    \\
            $\times$ & $\checkmark$ & $\times$ & 64.46 & 47.68    \\
            $\times$     & $\times$  & $\checkmark$ & 56.82 & 40.35    \\
            $\checkmark$ & $\times$  & $\checkmark$ & 64.53 & 47.86    \\
            $\times$ & $\checkmark$ & $\checkmark$ & \textbf{66.73} & \textbf{55.56}    \\
            \bottomrule
        \end{tabular}
        \label{tab:booktabs_ablation}
    \end{table*}
    

  We visualize the learned packs generated by our Hessian-guided adaptive packing mechanism in Figure~\ref{fig:visualize}. It reveals an interesting phenomenon that the earlier blocks in the network tend to cluster into a single pack. This suggests that these blocks are tightly coupled and should be optimized together, highlighting the importance of our packing strategy in capturing the complex dependencies within the network.

\subsubsection{Training Efficiency Analysis}

As shown in Table~\ref{tab:exe-time}, Pack-PTQ exhibits high efficiency across various models, highlighting its practicality in execution time. Notably, it only requires 0.79 hours on ResNet18 and 1.80 hours on MNV2. For MNasNetx2, ViT-S and DeiT-S, Pack-PTQ remains competitive, with execution times of approximately 2.3 hours, indicating its scalability. Notwithstanding, the results for more complex models (e.g., Swin-S) indicate that additional optimization efforts should be explored in future research to improve efficiency.

\begin{table*}[htpb]
\centering
\caption{Execution time (hours) of Pack-PTQ. }
\renewcommand{\arraystretch}{1.2} 
\setlength{\tabcolsep}{8pt} 
\begin{tabular}{lcclcc}
\toprule
\multicolumn{2}{c}{\textbf{CNN Models}} & & \multicolumn{2}{c}{\textbf{ViT Models}} \\
\cmidrule(lr){1-2} \cmidrule(lr){4-5}
\textbf{Model} & \textbf{Exe. Time} & & \textbf{Model} & \textbf{Exe. Time} \\
\midrule
    ResNet18  & 0.79           & & ViT-S   & 2.31 \\
    MNV2      & 1.80           & & DeiT-S  & 2.27 \\
    MNasx2    & 2.25           & & Swin-S  & 6.82 \\
\bottomrule
\end{tabular}
\label{tab:exe-time}
\end{table*}

\section{Conculsion}
    This paper presents Pack-PTQ, a novel method for post-training quantization of neural networks. Pack-PTQ assigns a Hessian-based importance score to each block, which enables the network to be divided into non-overlapping packs that serve as the base unit for reconstruction. Furthermore, to promote accuracy, Pack-PTQ introduces a pack-based mixed-precision quantization approach that leverages aggressive low-bit quantization for packs with lower sensitivities and high-bit quantization for those with high sensitivities. Extensive experiments on multiple vision tasks, including image classification and point cloud classification with diverse network architectures (e.g., CNNs, ViTs, and PointNet), demonstrate that complex neural networks can be compressed into low-bit versions without significantly compromising performance. In future work, we plan to extend our Pack-PTQ to more complicated vision tasks, and focus on exploring more efficient methods for pack reconstruction, to reduce the computational overhead associated with this process. 



\end{document}